\relax
\documentclass[letterpaper]{article} 
\usepackage{aaai19}  
\usepackage{times}  
\usepackage{helvet}  
\usepackage{courier}  
\usepackage{url}  
\usepackage{graphicx}  
\usepackage{amsmath}
\usepackage{amsfonts}
\usepackage[utf8]{inputenc}
\usepackage[english]{babel}
\usepackage{algorithm}
\usepackage[noend]{algpseudocode}
\usepackage{amsthm}
\usepackage{graphics}
\usepackage{caption}
\usepackage{float}
\usepackage{subcaption}
\usepackage{tabu}
\usepackage{adjustbox}
\usepackage{multirow}
\usepackage{balance}
\usepackage{stfloats}
\theoremstyle{plain}
\newtheorem{definition}{Definition}[section]
\newtheorem{proposition}{Proposition}[section]
\newtheorem{assumption}{Assumption}[section]

\frenchspacing  
\setlength{\pdfpagewidth}{8.5in}  
\setlength{\pdfpageheight}{11in}  
  \pdfinfo{
/Title (ANS: Adaptive Network Scaling for Deep Rectifier Reinforcement Learning Models)
/Author (Yueh-Hua Wu, Fan-Yun Sun, Yen-Yu Chang, Shou-De Lin)}
\setcounter{secnumdepth}{2}  
 \begin{document}
%
\title{ANS: Adaptive Network Scaling for Deep Rectifier Reinforcement Learning Models}
\author{
Yueh-Hua Wu$^{1,2}$, Fan-Yun Sun$^{1}$, Yen-Yu Chang$^{1}$, Shou-De Lin$^{1}$\\
$^{1}$National Taiwan University, $^{2}$RIKEN-AIP\\
\{d06922005,\ sdlin\}@csie.ntu.edu.tw, \{b04902045,\ b03901138\}@ntu.edu.tw
}
\maketitle
\begin{abstract}
This work provides a thorough study on how reward scaling can affect performance of deep reinforcement learning agents. In particular, we would like to answer the question that \emph{How does reward scaling affect non-saturating ReLU networks in RL?} This question matters because ReLU is one of the most effective activation functions for deep learning models. We also propose an Adaptive Network Scaling framework to find a suitable scale of the rewards during learning for better performance. We conducted empirical studies to justify the solution. 
\end{abstract}

\section{Introduction}
Deep reinforcement learning (RL) has achieved tremendous success in a variety of domains such as recommendation systems \cite{chen2018stabilizing,dulac2015deep} and board games \cite{silver2017mastering}. However, training a robust DRL agent is generally considered as non-trivial. Several factors such as network architecture (activation functions, number of layers, learning rate), codebases  \cite{duan2016benchmarking},
random seed, and reward scale \cite{henderson2017deep} are shown to have significant impact on the overall performance of RL agents. 
 This paper focuses on the \emph{reward scaling} problem as reported in \cite{henderson2017deep,gu2016q,duan2016benchmarking}, in which it is discovered that different reward scales can result in performance change by a large margin, transforming an unlearnable task into a learnable task. Given activation functions like \emph{sigmoid} or \emph{tanh}, the problem of saturation \cite{glorot2010understanding,vincent2015efficient,montavon1998tricks} seems to be the reason why large reward scales cause performance drop since it is harder for bounded output to handle targets of large scale. One plausible remedy is to choose the Rectified Linear Unit (ReLU) \cite{nair2010rectified} since it is not a saturating activation; that is,
\begin{align*}
    \lim_{x\rightarrow \infty}\text{ReLU}(x)=+\infty.
\end{align*}
A question therefore arises, \emph{How does ReLU react to different scales of reward?} The answer does matter because ReLU is faster and the consequent sparse representations have shown remarkable performance in training deep neural networks \cite{glorot2011deep}. We would also like to study how the reward scaling affects the \emph{dying ReLU} problem, and how some potential solutions such as Leaky-ReLU \cite{maas2013rectifier} and ELU \cite{clevert2015fast} can be affected by the scaling of rewards. One of the major conclusion in this paper is that different from the intuition that smaller reward scale seems to be better (which is usually true for \emph{sigmoid} or \emph{tanh}), for ReLU a larger scale on rewards generally leads to a better performance. 
In a nutshell, we would like to answer the following questions in this work.
\begin{itemize}
    \item[Q1.] What is the difference between adjusting learning rate and reward scaling?
    \item[Q2.] How is ReLU networks affected by reward scales and whether it is related to the level of dying ReLU?
    \item[Q3.] Does non-saturating Leaky-ReLU and ELU improve the performance of RL by avoiding the dying-ReLU problem and how do they react when different reward scales are applied?
    \item[Q4.] Since reward scale matters, how to find appropriate reward scales yielding better ReLU-based DRL models?
\end{itemize}

Section 2 gives answers to Q1, Q2, and Q3. We have shown that the change of learning rate actually shows opposite effect with the change of reward scale in terms of dying ReLU. We further show that the influence of reward scaling is significant -- unlearnable tasks can become learnable when appropriate scale is adapted. Finally, the performance of ELU and leaky-ReLU can be greatly changed with different reward scales as well. Based on the empirical evidence obtained in our experiments, we solve Q4 by proposing the \emph{adaptive network scaling} (ANS) framework, which enables ReLU networks to find the proper scale for reward targets and quickly update the parameters accordingly, without the need to retrain RL models after scaling. 


\subsection{Preliminaries}
RL defines a class of algorithms solving problems modeled as a Markov Decision Process (MDP). An MDP consists of a tuple $(\mathcal{S}, \mathcal{A}, \mathcal{T}, \mathcal{R}, \gamma)$, where $\mathcal{S}$ is the space of state $s$, $\mathcal{A}$ is the space of action $a$, $\mathcal{T}$ is the transition function with probability distribution $\Pr(s'\vert s, a)$, $\mathcal{R}(s,a)$ is the reward function that outputs a scalar feedback given action $a$ made at state $s$, and $\gamma$ is the discount factor.

In actor-critic framework, the actor is the policy function $\pi_\theta(a\vert s)$, which is used to react to environments and collect samples of experience $(s, a, r)$. Those samples are used to update the critic, the value function $Q_\pi(s,a)$ that outputs the estimate of cumulative reward of current policy $\pi_\theta$,
\begin{align}
    Q_\pi(s_t,a_t) = E_{s_{t+1}, a_{t+1},...}\left[\sum_{i=0}^\infty \gamma^i r(s_{t+i})\right].    
\end{align}
The output of value function can be further used to update policy function, which aims to maximize the expected return $J(\theta)$. Policy gradient method \cite{sutton2000policy,kakade2002natural} is usually preferable to update the policy in actor-critic methods. The policy gradient can be written as
\begin{align}
    \nabla_\theta J(\theta)=E_{\tau\sim\pi_\theta(\tau)}\left[\nabla_\theta \log \pi_\theta(\tau)r(\tau)\right] \label{policygrad},
\end{align}
where $\pi_\theta$ is the policy function and $\tau$ is the trajectories generated from the interaction between $\pi$ and the environment. Here the cumulative reward $r(s,a)$ is sometimes replaced with Q-value function $Q(s,a)$ or advantage function,
\begin{align}
    A(s,a)=Q(s,a)-V(s)
\end{align}
so as to reduce the variance of policy gradient.

Neural networks \cite{rosenblatt} is a powerful function approximator trained with back-propagation. Different optimizers such as Stochastic Gradient Descent (SGD) \cite{rumelhart1985learning} or Adam \cite{kingma2014adam} are adapted to stabilize the parameter update, which can further improve the outcome of the learning process.
SGD updates the parameters with shuffled order of instances and usually incorporates Nesterov momentum \cite{nesterov1983method} to prevent from taking a big jump in the direction of the updated gradient. On the other hand, Adam optimizer considers the first and the second moment of the gradient sequence with bias correction for parameter update
\begin{align}
    \theta_t \leftarrow \theta_{t-1}-\alpha\frac{\Hat{m_t}}{\sqrt{\Hat{v_t}+\epsilon}}\label{eq:adam},
\end{align}
where $\Hat{m_t}$ and $\Hat{v_t}$ is the unbiased first and the second moment of gradients respectively. 


\section{Influence of Reward Scaling}
Reward scaling indicates the change in MDP from $(\mathcal{S}, \mathcal{A}, \mathcal{T}, \mathcal{R}, \gamma)$ to $(\mathcal{S}, \mathcal{A}, \mathcal{T}, c\mathcal{R}, \gamma)$, where $c\in \mathbb{R}^+$. When it comes to reward scaling, the first question is usually -- what is the difference between scaling learning rate and scaling reward function (Q1)? By considering the gradient update with loss function $\mathcal{L}(f(\theta), y)$, we have
\begin{align}
    \theta_t\leftarrow \theta_{t-1} - \alpha \nabla_{f}\mathcal{L}(f(\theta), y)\nabla_ \theta f(\theta).
\end{align}
If mean-square-error (MSE) loss is used, the update rule becomes
\begin{align}
    \theta_t\leftarrow \theta_{t-1}-\frac{2
    \alpha}{N}\sum_i(f_\theta(x_i)-y_i)\nabla_\theta f_\theta(x_i).
\end{align}
By scaling learning rate, the magnitude of the update term is scaled equally, and the signs of the gradients remain the same. However, if the target $y$ is scaled, the distance between the target and the estimate is changed instead of being scaled, which brings different sign distribution from the one without scaling. It is possible to scale the target $y$ so as to obtain a more informative gradient distribution. It can also potentially prevent the zig-zagging phenomena \cite{azzigzag} by making gradients have different signs (i.e. if the gradients are of the same sign, the parameters of neural networks are updated to the same direction, which may cause overshooting and require another update with opposite direction). Such zig-zagging results in optimization in an inefficient manner. On the other hand, if inappropriate reward scale is applied, the performance drops drastically as shown in \cite{henderson2017deep}.

The other aspect that reward scaling affects the learning performance are the \emph{saturation} problem \cite{glorot2010understanding,vincent2015efficient,montavon1998tricks} and the \emph{dying ReLU} problem. For activation functions like \emph{sigmoid} or \emph{tanh}, the output of the value is bounded and the gradients at these regions are almost zero. By chain rule, the gradient of the weight and bias in previous layers are diminished if the gradient of the activation is close to zero, which blocks the gradient signal from flowing to the subsequent layers. For ReLU, a large gradient may prevent neurons to output values larger than zero for all instances. Consequently, the gradient passed through the neurons becomes zero, resulting in the \emph{dying ReLU} problem. Even though dying ReLU is not completely irreversible \footnote{Technically, except for the first layer, the update of other parameters can change the distribution of the input of the dying neurons, which is possible to make them alive again.}, it may take large amount of time to recover and thus slow down the learning process.

Since the data are coming in an online fashion for RL, it is hard to apply the definition of dying-ReLU -- for all input $x$ of the batch data, the output of neuron $n$ is negative. Consequently, here we define \emph{pseudo-dying ReLU}, which is used to evaluate the tendency of neurons that may become dying.

\begin{definition}
For a set of collected samples $\mathcal{B}$ with $\lvert\mathcal{B}\rvert=B$, if
\begin{align}
    f_{n}(f_p(\pmb{x}))\leq 0, \forall \pmb{x}\in \mathcal{B},
\end{align}
where $f_p$ is the output of the previous layer and $f_{n}$ is the function of neuron $n$. For this case, we define neuron $n$ with ReLU as a pseudo-dying ReLU with respect to $\mathcal{B}$.
\end{definition}
\begin{proposition}\label{prop:proof}
Consider $f_n(\pmb{p})=\pmb{w}^\top \pmb{p}+b$ and $\lVert \pmb{p}\rVert=f_p(\mathcal{B})$ is normally distributed \footnote{A more proper setting is to assume a truncated normal distribution. However, to simplified the derivation we use normal distribution and assume that the distribution is far enough from zero.}. For the distribution of the angle between $\pmb{w}$ and $\pmb{p}$, we follow empirical distribution if necessary. If neuron $n$ is pseudo-dying with respect to $\mathcal{B}$, then the probability that $\mathcal{B}\cup \left\{\pmb{x}_{B+1}\right\}$ does not cause pseudo-dying ReLU is upper bounded by
\begin{enumerate}
    \item $\lvert \pmb{w}^\top\pmb{p}_i\rvert\geq\lvert b\rvert$ and $\pmb{w}^\top\pmb{p}_i<0$\\
    \begin{align}
        \Pr(\text{not pseudo-dying ReLU})\leq \frac{1}{2}\left[1+\text{erf}\left(\frac{-1}{\sqrt{2B}}\right)\right].
    \end{align}
    \item $\lvert \pmb{w}^\top\pmb{p}_i\rvert\leq\lvert b\rvert$ and $b<0$\\
    \begin{align}
        &\Pr(\text{not pseudo-dying ReLU})\nonumber\\
        &\leq \frac{1}{2}\left[1-\text{erf}\left(\sqrt{\frac{2(B-1)}{B}}\left(1-\frac{\Bar{\mu}}{\lvert b\rvert/\lVert \pmb{w}\rVert\lvert\cos\theta_\text{min}\rvert}\right)\right)\right].
    \end{align}
\end{enumerate}
For other cases, $\Pr(\text{not pseudo-dying ReLU})=0$.
\end{proposition}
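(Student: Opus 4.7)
\medskip

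\noindent\textbf{Proof plan.} The plan is to reduce the event ``$\mathcal{B}\cup\{\pmb{x}_{B+1}\}$ is not pseudo-dying'' to a single inequality on the new sample and then apply a one-sided Gaussian tail bound case-by-case. Since the $B$ existing points already satisfy $\pmb{w}^\top\pmb{p}_i+b\le 0$, the enlarged batch fails to be pseudo-dying if and only if
\begin{align*}
\pmb{w}^\top\pmb{p}_{B+1}+b>0,\qquad\text{i.e.,}\qquad \|\pmb{w}\|\,\|\pmb{p}_{B+1}\|\cos\theta_{B+1}>-b.
\end{align*}
I would decompose $\pmb{w}^\top\pmb{p}$ in this polar form, combine the normality assumption on $\|\pmb{p}\|$ with the empirical distribution on the angle $\theta$, and then bound the probability of the displayed inequality separately in each regime.

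For Case 1, where $|\pmb{w}^\top\pmb{p}_i|\ge|b|$ and $\pmb{w}^\top\pmb{p}_i<0$ dominates, the $B$ observations $X_i=\pmb{w}^\top\pmb{p}_i$ are all strictly negative and --- by linearity and the normality of $\|\pmb{p}\|$ --- approximately Gaussian. The sample mean $\bar X$ is an unbiased estimator with standard error $\sigma/\sqrt{B}$, and the event $X_{B+1}>-b$ (which is essentially $X_{B+1}>0$ in this dominant-linear regime) becomes a standardized Gaussian deviation of magnitude $1/\sqrt{B}$. The standard one-sided Gaussian tail then yields the bound $\tfrac12[1+\text{erf}(-1/\sqrt{2B})]$.

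For Case 2, where $|\pmb{w}^\top\pmb{p}_i|\le|b|$ and $b<0$, the required inequality $\|\pmb{p}_{B+1}\|\cos\theta_{B+1}>|b|/\|\pmb{w}\|$ can only hold for a favorable angle. I would upper-bound the probability by replacing $\cos\theta_{B+1}$ by its most favorable empirical value $|\cos\theta_{\min}|$, producing the threshold $T=|b|/(\|\pmb{w}\|\,|\cos\theta_{\min}|)$, and then substitute the plug-in sample mean $\bar\mu$ of $\|\pmb{p}_i\|$ together with a Bessel-type variance estimate into the Gaussian tail $\Pr(\|\pmb{p}_{B+1}\|>T)$; the standardization produces the claimed $\sqrt{2(B-1)/B}\,(1-\bar\mu/T)$ inside the erf. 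For the remaining configurations --- $|\pmb{w}^\top\pmb{p}_i|\ge|b|$ with $\pmb{w}^\top\pmb{p}_i\ge 0$, or $|\pmb{w}^\top\pmb{p}_i|\le|b|$ with $b\ge 0$ --- one checks directly that $\pmb{w}^\top\pmb{p}_i+b\ge 0$, which contradicts the pseudo-dying hypothesis, so those regimes are impossible and the probability is vacuously $0$.

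The hardest part is pinning down the exact constants inside the erf arguments, especially the Bessel-type factor $\sqrt{2(B-1)/B}$ and the precise role of $\bar\mu/T$ in Case 2, because these are sensitive to the specific plug-in estimator chosen for the variance of $\|\pmb{p}\|$ from the $B$ pseudo-dying observations and to the worst-case replacement of $\theta_{B+1}$ by $\theta_{\min}$. Once that estimator and the worst-case angle are fixed, the rest of each case reduces to a routine standardization of a Gaussian tail.
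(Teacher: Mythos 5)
Your reduction of the event to a single condition on the new sample, the polar decomposition $\|\pmb{w}\|\,\|\pmb{p}\|\cos\theta$, the case split, and the vacuity of the remaining regimes all match the paper's skeleton. But the actual content of the proposition is the two erf bounds, and in both cases you have not supplied the mechanism that produces them. The paper's central idea, which you never identify, is that the pseudo-dying hypothesis localizes all $B$ existing norms $\|\pmb{p}_i\|$ relative to the threshold, and this \emph{bounds the sample standard deviation} $\bar\sigma$ that appears in the denominator of the standardized Gaussian tail. In Case 1, Cauchy--Schwarz gives $\|\pmb{p}_i\|\geq|b|/\|\pmb{w}\|$ for every $i$, so all samples sit on one side of the threshold; this forces $\bar\sigma^2\leq B\bigl(|b|/\|\pmb{w}\|-\bar\mu\bigr)^2$, and substituting into $\tfrac12[1+\mathrm{erf}((|b|/\|\pmb{w}\|-\bar\mu)/(\bar\sigma\sqrt2))]$ makes the data-dependent quantities cancel, leaving exactly $-1/\sqrt{2B}$. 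Your proposed route to the same constant --- that the sample mean has standard error $\sigma/\sqrt{B}$ so the event is ``a standardized deviation of magnitude $1/\sqrt{B}$'' --- is not valid: the relevant random variable is a fresh draw $\|\pmb{p}_{B+1}\|$, whose deviation is measured in units of $\sigma$, not $\sigma/\sqrt{B}$, and there is no reason the gap between the population mean and the threshold $-b$ equals one standard error. The $1/\sqrt{B}$ comes from the variance bound, not from the law of the sample mean.

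In Case 2 you correctly set up the threshold $T=|b|/(\|\pmb{w}\|\,|\cos\theta_{\min}|)$ and the tail $\Pr(\|\pmb{p}_{B+1}\|>T)$, but you explicitly defer ``the hardest part,'' namely where $\sqrt{2(B-1)/B}$ and $1-\bar\mu/T$ come from. In the paper these follow from a Popoviciu-type bound: the hypothesis confines every $\|\pmb{p}_i\|$ to $[0,T]$, hence $\bar\sigma^2\leq\frac{B}{B-1}\,T^2/4$, and then
\begin{align*}
\frac{T-\bar\mu}{\bar\sigma\sqrt2}\;\geq\;\sqrt{\frac{2(B-1)}{B}}\left(1-\frac{\bar\mu}{T}\right),
\end{align*}
which is the entire content of the stated bound. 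Since the quantitative steps are exactly what the proposition asserts, leaving both of them unresolved (and substituting an incorrect mechanism in Case~1) constitutes a genuine gap rather than an alternative proof.
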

\begin{proof}
If neuron $n$ is a pseudo-dying ReLU with respect to $\mathcal{B}$, we have
\begin{align}
    f_n(\pmb{p}_i)=f_n(f_p(\pmb{x}_i))<0, \forall \pmb{x}_i\in\mathcal{B}.\label{pdrr}
\end{align}
We consider two cases that satisfy Equation \ref{pdrr}
\begin{enumerate}

    \item $\lvert \pmb{w}^\top\pmb{p}_i\rvert\geq\lvert b\rvert$ and $\pmb{w}^\top\pmb{p}_i<0$\\
    With Cauchy-Schwarz inequality, we have $\lVert \pmb{p}_i\rVert \geq\lvert b\rvert/\lVert \pmb{w}\rVert$. Therefore, the probability of sampling an instance that is not dying ReLU is
    \begin{align}
        &\Pr(\text{not pseudo-dying ReLU})\nonumber\\=&\Pr(\lVert \pmb{p}_i\rVert<\frac{\lvert b\rvert}{\lVert \pmb{w}\rVert})\\
        =&\frac{1}{2}\left[1+\text{erf}\left(\frac{\lvert b\rvert/\lVert \pmb{w}\rVert-\Bar{\mu}}{\Bar{\sigma}\sqrt{2}}\right)\right],
    \end{align}
    where $\Bar{\sigma}$ and $\Bar{\mu}$ are the estimate standard deviation and the estimate mean respectively. The estimate variance is bounded by
    \begin{align}
        \Bar{\sigma}^2&=\frac{1}{(B-1)}\sum (\lVert \pmb{p}_i\rVert-\Bar{\mu})^2\\
        &\leq \frac{1}{(B-1)}\left[B(B-1)(\frac{\lvert b\rvert}{\lVert \pmb{w}\rVert}-\Bar{\mu})^2\right]\\
        &=B(\frac{\lvert b\rvert}{\lVert \pmb{w}\rVert}-\Bar{\mu})^2.
    \end{align}
    Therefore,
    \begin{align}
        \Pr(\text{not pseudo-dying ReLU})\leq \frac{1}{2}\left[1+\text{erf}\left(\frac{-1}{\sqrt{2B}}\right)\right].
    \end{align}
    
    \item $\lvert \pmb{w}^\top\pmb{p}_i\rvert\leq\lvert b\rvert$ and $b<0$\\
    In this case, it should be noted that if $\pmb{p}_i$ and $\pmb{w}$ are orthogonal, then the probability of being not pseudo-dying ReLU is zero. We further consider the case that $\lvert\cos\theta\rvert\geq \lvert\cos \theta_\text{min}\rvert$. Similarly, we have $\lVert \pmb{p}_i\rVert\leq \lvert b\rvert/\lVert \pmb{w}\rVert\lvert\cos\theta_\text{min}\rvert$.
    \begin{align}
        &\Pr(\text{not pseudo-dying ReLU})\nonumber\\=&\Pr(\lVert \pmb{p}_i\rVert> \lvert b\rvert/\lVert \pmb{w}\rVert\lvert\cos\theta_\text{min}\rvert)\\
        =&\frac{1}{2}\left[1-\text{erf}\left(\frac{\lvert b\rvert/\lVert \pmb{w}\rVert\lvert\cos\theta_\text{min}\rvert-\Bar{\mu}}{\Bar{\sigma}\sqrt{2}}\right)\right].
    \end{align}
    Since the samples $\lVert \pmb{p}_i\rVert$ are bounded by zero and $\lvert b\rvert/\lVert \pmb{w}\rVert\lvert\cos\theta_\text{min}\rvert$, the variance is bounded by
    \begin{align}
        \Bar{\sigma}^2\leq \frac{B}{(B-1)}\frac{(\lvert b\rvert/\lVert \pmb{w}\rVert\lvert\cos\theta_\text{min}\rvert)^2}{4}.
    \end{align}
    Therefore,
    \begin{align}
        &\Pr(\text{not pseudo-dying ReLU})\nonumber\\
        \leq& \frac{1}{2}\left[1-\text{erf}\left(\sqrt{\frac{2(B-1)}{B}}\left(1-\frac{\Bar{\mu}}{\lvert b\rvert/\lVert \pmb{w}\rVert\lvert\cos\theta_\text{min}\rvert}\right)\right)\right].
    \end{align}
\end{enumerate}
\end{proof}
\noindent The physical meaning of the proposition is that if neuron $n$ is currently pseudo-dying, then the probability that it is still pseudo-dying with subsequent samples is high.

In the following context, we use pseudo-dying ReLU as an indicator of the true dying ReLU. Using this indicator is computationally cheaper and more efficient than checking whether the neuron is really dying or not. Here we also define pseudo-dying ReLU ratio (PDRR). For layer $i$, we have,
\begin{align}
    \text{PDRR}_i=\frac{\text{number of pseudo-dying ReLU in layer }i}{\text{number of neurons in layer }i}.
\end{align}


It should be noted that, in actor-critic approaches, the value network is where we can differentiate between changing learning rate and reward scaling. If the target of value network is scaled, the subsequent gradients can have different sign distribution as mentioned instead of simply being scaled. For policy network, as shown in Equation \ref{policygrad}, changing the scale of the estimate is the same as changing the magnitude of policy gradients. Especially, for RMSprop \cite{tieleman2012lecture} and Adam \cite{kingma2014adam} optimizers, as suggested in Equation \ref{eq:adam}, such scaling actually makes no difference because the division of the first moment to the squared root of the second moment cancels out the scale. In our experiments, to reduce variables that may affect experiment results, if the reward is scaled $c$ for the critic, we multiply the estimate from the critic with $1/c$ so that the magnitude of the estimate to the actor is the same.

\subsection{ReLU Experiments}
In this section, we aim to answer the following questions by conducting extensive experiments: (Q1) What is the difference between changing learning rate and changing reward scale in terms of cumulative rewards and PDRR? (Q2) How does ReLU network respond to different reward scales in terms of cumulative rewards and PDRR? (Q3) How is the performance of the variants of ReLU without dying-ReLU issue such as Leaky-ReLU and ELU? Are such activations insensitive to reward scales? To answer these questions, we compare different settings on DDPG \cite{lillicrap2015continuous} and A2C \cite{mnih2016asynchronous} to make sure the results are consistent with different actor-critic models. In this work, our experiments are built upon Mujoco environment \cite{todorov2012mujoco}, which is a physics engine for continuous control and provides multi-joint dynamics simulation.

\subsubsection{Background}
DDPG extends the deterministic policy gradient to continuous action spaces with actor-critic approach and utilizes Ornstein-Uhlenbeck process for exploration. The actor is updated by the differential of the expected return from the start distribution $J$ with respect to the actor parameters:
\begin{align}
    \nabla_\theta J\approx E_{s_t\sim\rho^\beta}\left[\nabla_\theta Q(s,a)\vert _{s=s_t,a=\mu(s_t)}\right]\nonumber,
\end{align}
where $\rho^\beta$ is the state distribution following stochastic policy $\beta$. The critic is updated with expected return that is estimated from target networks:
\begin{align}
    y_t=r_t+\gamma Q'(s_{t+1}, \mu'(s_{t+1}))\nonumber,
\end{align}
where $Q'$ is the target critic and $\mu'$ is the target actor. To make the learning more stable, ``soft'' update is used to update target networks so that the output of the target networks will not change too fast.

The other model that we use here is advantage actor critic (A2C), which is a synchronous and batched version of asynchronous actor critic (A3C) \cite{mnih2016asynchronous}. It utilizes policy gradient as indicated in Equation \ref{policygrad} to update the policy. Advantage function $A(s,a)$ is used in place of reward $r(\tau)$ to scale the gradient $\nabla_\theta \log \pi_\theta$ because it obtains lower variance for gradient estimation. What makes A3C and A2C different from other actor-critic methods is that it accumulates gradients of workers and use them to update the global model.

Exponential Linear Unit (ELU) is similar to ReLU except for negative inputs,
\begin{align}
    \text{ELU}(x)=\max (0, x) + \min(0, \alpha(\exp (x)-1))\nonumber.
\end{align}
An interesting result of this form is that the gradient can be computed with addition:
\[
    \text{ELU}'(x)=
    \begin{cases}
        1,&\text{if }x>0\\
        \text{ELU}(x) + \alpha,&\text{otherwise.}
    \end{cases}
\]
On the other hand, Leaky-ReLU allows a mild slope when input is negative,
\begin{align}
    \text{LeakyReLU}(x)=\max (0,x)+\alpha\min(0,x)\nonumber,
\end{align}
where $\alpha$ is usually set to a small value such as $0.01$.

In the following experiments, we use three-layer neural networks as function approximators. Since the two ReLU layers have similar patterns, we show ReLU in the second layer, which shows significant difference across variables. For clarity and conciseness, the most representative figures are presented, and we leave others in appendix.

\subsubsection{Results on Adjusting Learning Rates}
Figures \ref{fig:lr_a2c} shows how cumulative reward and PDRR change with respect to different learning rate. The results are not surprising as too large learning rate results in high PDRR, which impedes further performance improvement because of limited available computation units. It should be noted that although high PDRR leads to bad performance, the same correlation does not apply when PDRR is lower than a threshold. For $10^{-3}$ and $7\cdot 10^{-4}$ learning rate, even though they have higher PDRR than $10^{-4}$ and $10^{-5}$, their returns are higher. It may be caused by the fact that it is possible for neural networks to have excessive computation capacity \cite{guss2018characterizing} for certain tasks; consequently, PDRR may pose no harm under such situation.

\begin{figure}[!tbh]
\centering
\begin{subfigure}{0.4\textwidth}
    \centering
    \includegraphics[width=1\linewidth]{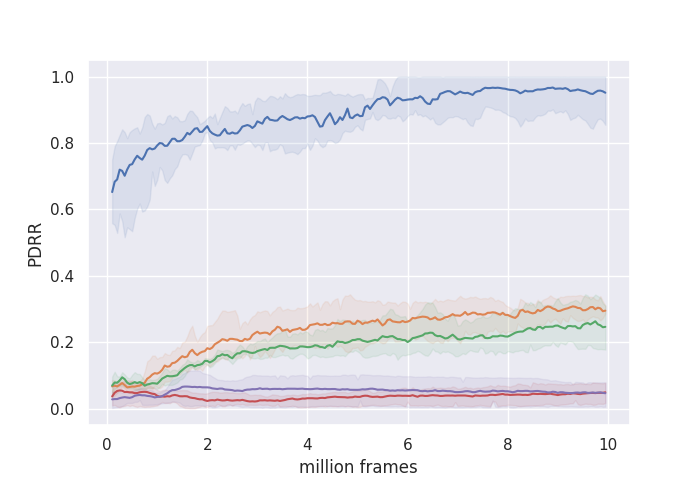}
    \caption{PDRR}
    \label{fig:lr_a2c_relu2}
\end{subfigure}
\begin{subfigure}{0.4\textwidth}
    \centering
    \vspace{-1mm}
    \includegraphics[width=1\linewidth]{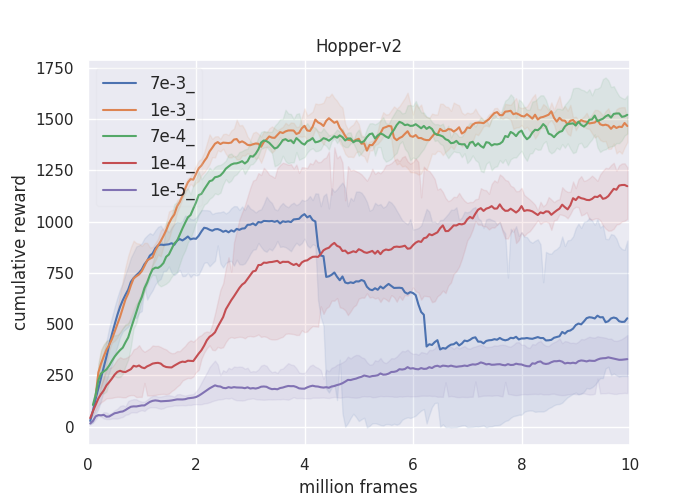}
    \caption{Return}
    \label{fig:lr_a2c_reward}
\end{subfigure}
\caption{A2C learning rate comparison on Hopper-v2.}
\label{fig:lr_a2c}
\end{figure}

\subsubsection{Reward Scale}
In Figure \ref{fig:a2c} and \ref{fig:ddpg}, they show that reward scales can have tremendous influence on the performance of RL models with ReLU networks. With proper reward scales, the results can be changed from learning nothing to a well-trained model. Another fact worth noting is that large reward scale does not cause significant PDRR compared with other scales; on the contrary, 0.5 reward scale shows a tendency to cause higher PDRR as suggested in Figure \ref{fig:rs_pdrr}. Such phenomenon also reflects on cumulative reward as A2C with 0.5 reward scale is unable to converge to a satisfactory solution. 

The phenomenon of small reward scale resulting in dying-ReLU can be explained as follows. Consider a layer with ReLU outputs a scalar as $\text{ReLU}(\pmb{w}^\top\pmb{x}+b)=y$. If the target $\Hat{y}$ is scaled by $c<1$ and $\pmb{x}$ remains unchanged, then the positive outputs of $\pmb{w}^\top\pmb{x}+b$ should be reduced while the negative part is not affected (the value does not even matter). In this case, $b$ is updated to make $y$ smaller, which potentially increase the probability of getting trapped by dying ReLU. In contrast, reducing learning rate decreases the magnitude of parameter updates in all directions, which proportionally make dying ReLU less probable. 

The results reveal that reward scales over certain unknown threshold usually allows RL models with ReLU to learn better. It can be caused by the fact that larger reward scales enlarge the difference between positive and negative returns and therefore the information of rewards becomes clearer. It can be confirmed by the outcome of the experiments that the influence of reward scaling on A2C is much larger than that on DDPG because A2C makes use of advantage function that has zero expected value with respect to actions. However, larger reward scale does not always mean better. Too large reward scale may introduce large gradients as well, which makes RL models unable to converge easily. As a result, to improve the performance of RL models, finding the reward scale that gives enough information of reward functions and meanwhile is not too large to converge to a good solution is where we can put effort.

\begin{figure*}[!ht]
\centering
\begin{subfigure}{0.34\textwidth}
    \centering
    \includegraphics[width=1\linewidth]{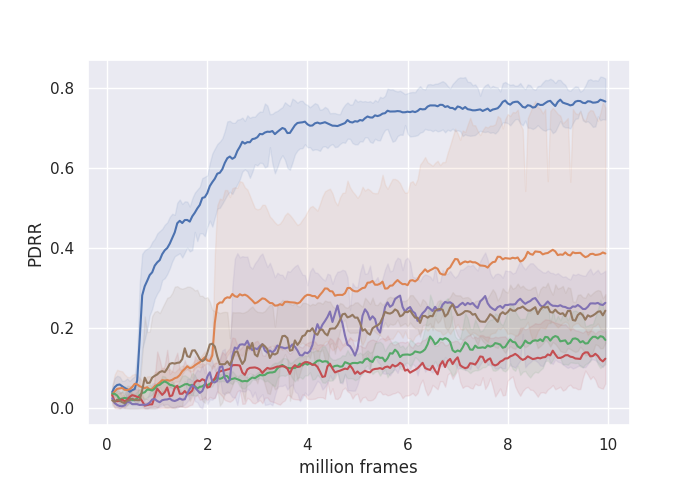}
    \caption{PDRR}
\end{subfigure}%
\begin{subfigure}{0.33\textwidth}
    \centering
    \includegraphics[width=1\linewidth]{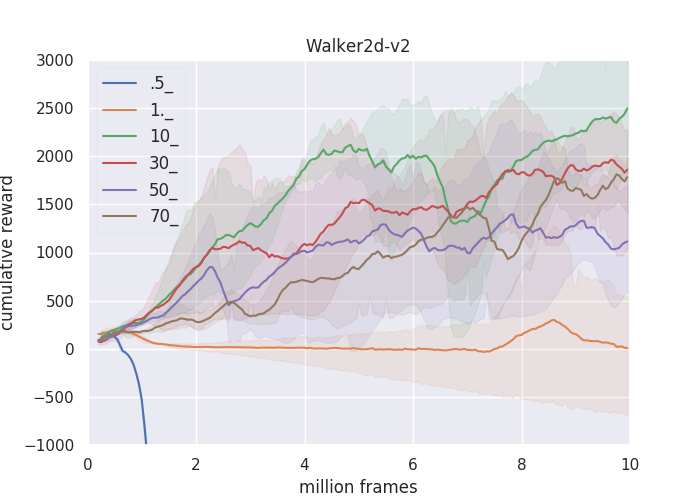}
    \caption{Return}
\end{subfigure}%
\begin{subfigure}{0.33\textwidth}
    \centering
    \includegraphics[width=1\linewidth]{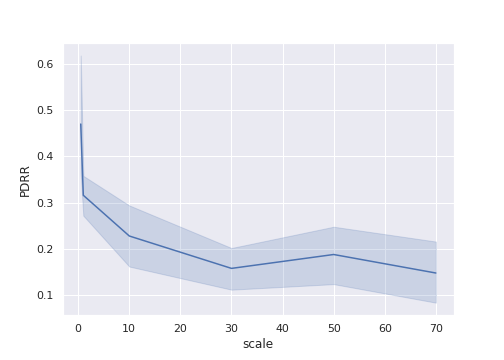}
    \caption{Reward scales vs. PDRR}
    \label{fig:rs_pdrr}
\end{subfigure}
\caption{A2C reward scales comparison on Walker2d-v2 and the correlation between reward scales and PDRR in five Mujoco environments.}
\label{fig:a2c}
\end{figure*}

\begin{figure}[!tbh]
    \centering
    \includegraphics[scale=0.4]{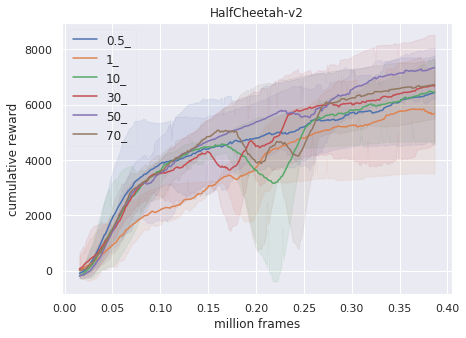}
    \caption{DDPG reward scales comparison on HalfCheetah-v2.}
    \label{fig:ddpg}
\end{figure}

\subsubsection{Leaky-ReLU and ELU}
We try variants of ReLU, Leaky-ReLU, and ELU to illustrate (1) how the performance of models with ELU or Leaky-ReLU changes with different reward scales, and (2) if the variants of ReLU without dying ReLU issue can outperform ReLU in RL problems. As Figure \ref{fig:act} suggests, ELU generally outperforms ReLU and Leaky-ReLU slightly; on the other hand, even though Leaky-ReLU does not suffer from dying ReLU, it still performs empirically worse than ReLU when it comes to cumulative rewards. Reward scaling is still an important factor to the performance of ELU and Leaky-ReLU. With proper reward scales, the RL models also converge to a much better solution.

\begin{figure}[!tbh]
    \centering
    \hspace{6mm}%
    \includegraphics[scale=0.4]{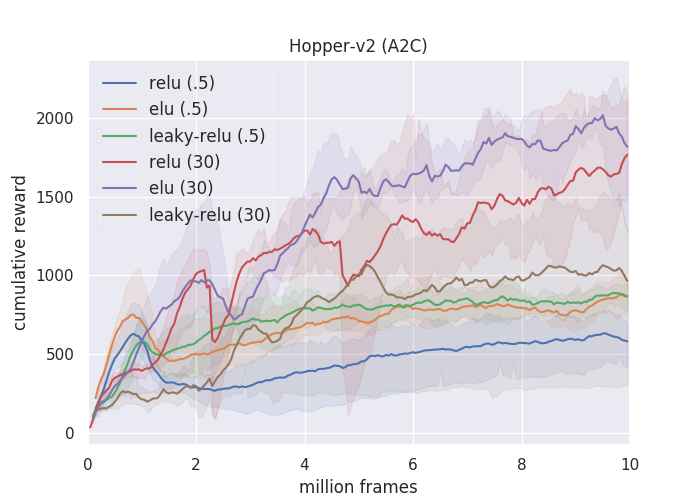}
    \caption{Activation comparison on Hopper-v2 with A2C.}
    \label{fig:act}
\end{figure}

\subsubsection{A Short Summary}
Learning rate and reward scale show utterly different pattern in terms of PDRR -- larger learning rate can result in higher PDRR whereas the higher reward scale have lower PDRR. The recommendation drawn from our experiments is that ELU slightly outperforms ReLU but is also computationally more costly than ReLU, so there is a trade-off. Even though Leaky-ReLU allows information to flow when the input is negative, empirically its performance is worse than ReLU. Since the magnitude of reward functions depends on the design of the environment, it is hard to give suggestions on reward scales. In spite of that, it seems that for ELU, Leaky-ReLU, and ReLU, a slightly larger reward scale such as 5 or 10 usually poses no harm and potentially brings benefits such as better performance.

\section{Adaptive Network Scaling}
We have shown that reward scaling has the potential to improve the performance of RL models by a large margin. In this section, we propose the \emph{Adaptive Network Scaling} (ANS) framework to answer the question: (Q4) How  to efficiently find an appropriate reward scale to improve the performance? This question can be decomposed into two sub-problems: (1) How to find a suitable reward scale? and (2) How to scale the reward while learning and transfer the learned parameters instead of learning from scratch?
For the first sub-problem, we propose a search strategy that requires few queries to find a satisfactory solution. For the second sub-problem, our solution is to transfer the model trained previously to the current reward scale so that retraining is not required. ANS is the combination of these two solutions.





\subsection{Adaptive Search Strategy}
Our previous experiments suggest that to achieve the best performance, the reward scale should be able to balance the magnitude of gradients and the maximization of the diversity between positive and negative samples. We first assume that the landscape of reward scale search space is concave as indicated in Assumption \ref{assum:concave}. 

\begin{assumption}\label{assum:concave}
We assume that for some reward scale $s_x$ and $s_y$, and for any $\alpha\in[0,1]$,
\begin{align}
    g(\alpha s_x + (1-\alpha) s_y)>\alpha g(s_x)+(1-\alpha)g(s_y),
\end{align}
where $g$ is the function that indicates the highest cumulative reward that the RL model can achieve with this reward scale.
\end{assumption}

Even though it is a strong assumption for the landscape, we still need to be careful not to use too many queries while optimizing it since estimating $g(s)$ requires solving an MDP. To derive a near-optimal solution under Assumption \ref{assum:concave}, we propose an adaptive search strategy that can find a satisfactory scale efficiently and is easy to implement. 

At the beginning of the strategy, we choose to multiply the reward scale with $c_\text{inc}>1$. If the ``performance'' is better than the previous scale, we continue to multiply the reward scale with the same $c_\text{inc}$ until the improvement stops. ANS then considers the case as overshooting and multiply the reward scale with $c_\text{dec}$ slightly less than one to refine the result and so on. It should be noted that the near-optimal scale can be found even when the optimal scale is less than one.

To derive the performance of the current scale, we leverage exponential moving average with bias correction \cite{kingma2014adam},
\begin{align*}
    m_t&\leftarrow \beta m_{t-1} + (1-\beta)R_t\\
    \Hat{m_t}&\leftarrow m_t/(1-\beta^t),
\end{align*}
where $R_t$ is the return of the $t$-th episodes and $\Hat{m_t}$ is the unbiased estimate of the mean. We record the highest unbiased estimate of mean $\Hat{m}_\text{max}$. If it is not updated in $T$ episodes, then ANS considers this case as getting trapped by a local optimum and $\Hat{m}_\text{max}$ is treated as the performance of the current scale.

The main advantage of this search method is that we do not need many queries of $g$ to find a satisfactory solution. In addition, if the optimal reward scale is bounded by $s_\text{max}$ and one, we have a bounded number of steps $n_\text{max}$,
\begin{align}
    n_\text{max}\leq \lceil \log_{c_\text{inc}}s_\text{max}\rceil - \lfloor \log_{c_\text{dec}}c_\text{inc}\rfloor.
\end{align}
Empirically, ANS requires less than six steps to achieve competitive performance of the best reward scale in our ReLU experiments and the computation cost is much lower than grid searching for the best scale.

\subsection{Network Scaling}
When the reward function is scaled, the target of state-action value $Q(s,a)$ and the state value $V(s)$ are scaled equally because they are exponentially weighted sum of rewards. 
Since learning from scratch is time consuming, here we introduce \emph{network scaling}, which allows us to preserve scaled outputs of the function,
\begin{align}
    cf_{W',b'}(\pmb{x})=f_{W,b}(c\pmb{x})\label{eq:scale}.
\end{align}
To achieve Equation \ref{eq:scale}, we utilize the scaling property of ReLU,
\begin{align}
    c\cdot \text{ReLU}(\pmb{x})=\text{ReLU}(c\pmb{x}).
\end{align}
The ReLU network can be expressed as 
\begin{align}
    f_{W,b}(\pmb{x})=\pmb{W}_n(...(\pmb{W}_2(\pmb{W}_1 \pmb{x}'+\pmb{b}_1))+\pmb{b}_2)...)+\pmb{b}_n,
\end{align}
where we omit ReLU in each layer for clarity and conciseness. If we scale each parameter,
\begin{align}
    f_{W',b'}(\pmb{x})&=\pmb{W}'_n(...(\pmb{W}'_2(\pmb{W}'_1 \pmb{x}+\pmb{b}'_1)+\pmb{b}'_2)...)+\pmb{b}'_n\\
    &=c_n\pmb{W}_n(...(c_1\pmb{W}_1 \pmb{x}+r_1\pmb{b}_1)...)+r_n\pmb{b}_n
\end{align}
by obeying the following constraints, we are able to preserve the scaled output as Equation \ref{eq:scale},


\begin{align}
    \prod_{i=1}^tc_i=r_t, \text{for }t=1,...,n
\end{align}
\begin{align}
    b_n=c.
\end{align}
We consider $c_i=\sqrt[n]{c}$ in the following derivation and our experiments. However, it is straightforward to extend to different values of $c_i$, which can be determined by the condition of each layer.

The change of the weights and the corresponding gradients are,
\begin{align}
    \pmb{W}_i^\text{new}&\leftarrow \sqrt[n]{c}\pmb{W}_i\\
    \nabla_{\pmb{W}_i^\text{new}}\mathcal{L}&\leftarrow c^{\left(2-\frac{1}{n}\right)}\nabla_{\pmb{W}_i}\mathcal{L}. 
\end{align}
For each bias, we have similar result,
\begin{align}
    \pmb{b}_i^\text{new}&\leftarrow \sqrt[n]{c^i}\pmb{b}_i\\
    \nabla_{\pmb{b}_i^\text{new}}\mathcal{L}&\leftarrow c^{\left(2-\frac{i}{n}\right)}\nabla_{\pmb{b}_i}\mathcal{L}.
\end{align}

We may consider reward scaling as a transfer approach that provides an initialization closed to the new solution space, which saves a lot of time to train RL models from scratch. The result with and without network scaling is shown in Figure \ref{fig:ablation}. Note that unless care is taken, manipulating parameters of neural networks can result in instability in the first few epochs of training; consequently, we strictly clip the gradients by imposing maximal norm on the policy network after scaling and gradually relax the norm, which prevents the trained policy network from being misled by the scaled value network at the beginning.

\begin{figure}[!tbh]
    \centering
    \includegraphics[scale=0.4]{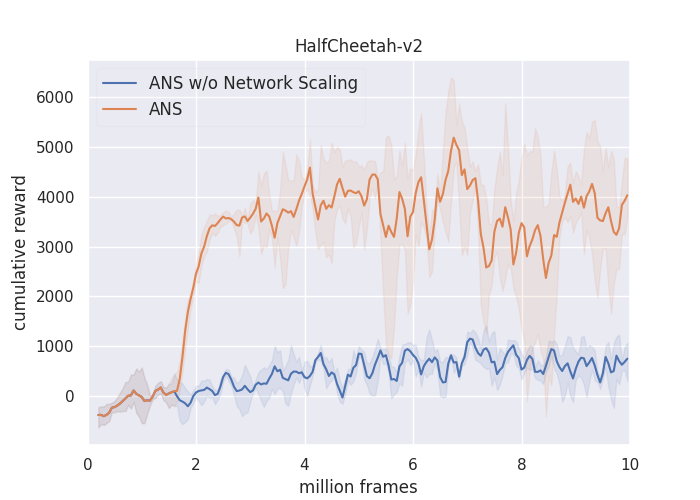}
    \caption{Comparison of ANS with and without network scaling on A2C.}
    \label{fig:ablation}
\end{figure}

\begin{algorithm}[!htb]
\caption{Adaptive Network Scaling}\label{algo:ans}
\begin{algorithmic}
\State Given tolerance $T$, $c_\text{inc}$ and $c_\text{dec}$
\State Initialize $t=0$, $t_\text{stop}=0$, $m_t=0$, $R_\text{prev}=-\infty$, $\Hat{m}_\text{max}=-\infty$, $c=c_\text{inc}$ and $\emph{reverse}=\text{False}$. 
\For{\text{each episode}}
\State $t\leftarrow t+1$
\State $t_\text{stop}\leftarrow t_\text{stop}+1$
\State $m_t\leftarrow \beta m_{t-1}+(1-\beta)R_t$
\State $\Hat{m_t} \leftarrow m_t/(1-\beta^t)$
\If{$\Hat{m_t} > \Hat{m}_{max}$}
    \State $\Hat{m}_{max}\leftarrow \Hat{m_t}$
    \State $t_\text{stop} = 0$
\EndIf
\State \textbf{end if}
\State
\If{$t_\text{stop}>T$}
    \If{\emph{reverse} and $\Hat{m}_\text{max}\leq R_\text{prev}$}
        \State \textbf{break}

    \ElsIf{not \emph{reverse} and $\Hat{m}_\text{max}\leq R_\text{prev}$}
        \State $c\leftarrow c_\text{dec}$
        \State \emph{reverse}$\leftarrow$True
    \EndIf
    \State scale the reward function with $c$
    \State \emph{NetworkScaling}($c$)
    \textbf{end if}
    \State $R_\text{prev}\leftarrow \Hat{m}_\text{max}$
    \State reinitialize $t, t_\text{stop}, m_t, \Hat{m}_{\text{max}}$
\EndIf
\textbf{end if}
\EndFor
\State \textbf{end for}
\end{algorithmic}
\end{algorithm}

\subsection{Related Work}
To the best of our knowledge, Pop-Art \cite{van2016learning} is the only framework that directly deals with reward scales. Their work is mainly for value-based RL approaches while we focus on actor-critic approaches, which is capable of achieving competitive performances \cite{duan2016benchmarking} such as DDPG \cite{lillicrap2015continuous}, PPO \cite{schulman2017proximal}, and A2C \cite{mnih2016asynchronous}. The main difference between Pop-Art and our approach is that ANS finds the best reward scale in terms of learning performance whereas Pop-Art maintains a normalized reward distribution, which may not guarantee to improve the outcome of learning.

Pop-Art considers the neural network as
\begin{align}
    f(x)=\pmb{\Sigma}g_{\theta,\pmb{W},\pmb{b}}(x)+\pmb{\mu}=\pmb{\Sigma}(\pmb{W}h_\theta(x)+\pmb{b})+\pmb{\mu},
\end{align}
where $g_{\theta,\pmb{W},\pmb{b}}$ is a normalized function and $h_\theta$ is a parameterized (non-linear) function. To preserve the output of the unnormalized function $f$ while maintaining the normalized function $g_{\theta,\pmb{W},\pmb{b}}$, that is,
\begin{align}
    f_{\theta,\pmb{\Sigma},\pmb{W},\pmb{b},\pmb{\mu}}(x)=f_{\theta, \pmb{\Sigma}_{\text{new}},\pmb{W}_{\text{new}}, \pmb{b}_{\text{new}}, \pmb{\mu}_{\text{new}}},
\end{align}
they apply incremental update to derive zero mean and unit variance with $\pmb{\Sigma}_{\text{new}}$ and $\pmb{b}_\text{new}$. Additionally, the output can be preserved by the following update
\begin{align}
    \pmb{W}_{\text{new}}&=\pmb{\Sigma}_{\text{new}}^{-1}\pmb{\Sigma}\pmb{W}\\
    \pmb{b}_{\text{new}}&=\pmb{\Sigma}_{\text{new}}^{-1}(\pmb{\Sigma}\pmb{b}+\pmb{\mu}-\pmb{\mu}_{\text{new}}).
\end{align}

They claim that their approach is able to deal with the problem that the magnitude of value target can change over time. However, even though maintaining the magnitude to a predetermined range is a straightforward approach to stabilizing the learning process, empirically, the performance is not generally good -- the performance drops in nearly fifty percent of the Atari games with Pop-Art approach as reported. This can be caused by the fact that it is hard to determine appropriate values that the mean and variance should be scaled to. Conceptually, we claim that the scaling method should consider the training status such as improvement in cumulative reward, which is the ultimate objective of RL.

\subsection{Experiments}
In our experiment, we use DDPG and A2C models to show how ANS and Pop-Art influence the performance. For performance evaluation, we follow \cite{henderson2017deep} to use final average across 5 trials of returns across the last 100 trajectories after $N$ frames. It should be noted that it is possible for ANS to terminate before $N$ frames and we continue to train the model until $N$ frames are reached.

Since A2C undergoes 16 environments at the same time, we use 100 updates as tolerance $T$. For DDPG, 100-episode tolerance is adapted for Walker2d-v2 and Hopper-v2, and 500-episode tolerance is used for HalfCheetah-v2. Similarly, the total number of frames $N$ for A2C is 10 million. For DDPG, we use one million frames for Walker2d-v2 and Hopper-v2, and two million frames for HalfCheetah-v2. Different number of frames and tolerance are adapted because DDPG keeps improving within one million frames on HalfCheetah-v2, which makes it difficult to apply ANS. We fix $\beta=0.9$, $c_\text{inc}=8.0$ and $c_\text{dec}=0.9$ to avoid cumbersome parameter tuning.  The performance of Swimmer-v2 and Ant-v2 in DDPG is not included since they can hardly be trained even with reward scaling. 

\begin{table*}[!tp]
\centering
\begin{adjustbox}{max height=1in}
\begin{tabular}{l|c|c|c|c|c}
\textbf{Algorithm} & \textbf{HalfCheetah-v2} & \textbf{Hopper-v2} & \textbf{Walker2d-v2} & \textbf{Swimmer-v2} & \textbf{Ant-v2}\\ \hline
DDPG (ReLU)          & 7944.54 & 264.19 & 705.96 & - & -\\ 
DDPG (ELU)          & 7899.17 & 795.83 & 1256.20 & - & -\\
DDPG (Sigmoid)      & 7730.29 & 1173.10 & 695.21 & - & -\\
DDPG (Tanh)         & 1802.56 & 808.24 & 723.77 & - & -\\
DDPG+Pop-Art    & 1273.33 & 287.92 & 356.08 & - & -\\ 
DDPG+ANS         & \textbf{9393.24} & \textbf{1233.73} & \textbf{1643.47} & - & -\\ \hline
A2C (ReLU)          & 1132.85 & 1341.84 & 44.81 & 11.80 & -15.77\\ 
A2C (ELU)           & 1280.25 & 1489.04 & -1086.8 & 27.14 & -15.73\\
A2C (Sigmoid)       & 832.32 & 1494.26 & 155.51     & -3.46 & -13.25\\
A2C (Tanh)          & 622.83 & 1495.83 & 140.67     & 25.12 & -17.84\\
A2C+Pop-Art     & -455.57 & 207.99       & -572.62 & -16.27 & -148.40\\ 
A2C+ANS    & \textbf{3689.64} & \textbf{1533.48} & \textbf{1760.07} & \textbf{115.12} & \textbf{1000.57}\\ 
\end{tabular}
\end{adjustbox}
\caption{Results of different frameworks applied to RL models. Final average across 5 trials of returns across the last 100 trajectories after $N$ frames.}
\label{table:cum_reward}
\end{table*}

\begin{figure*}[!tbh]
\centering
\begin{subfigure}{0.33\textwidth}
    \centering
    \includegraphics[width=1\linewidth]{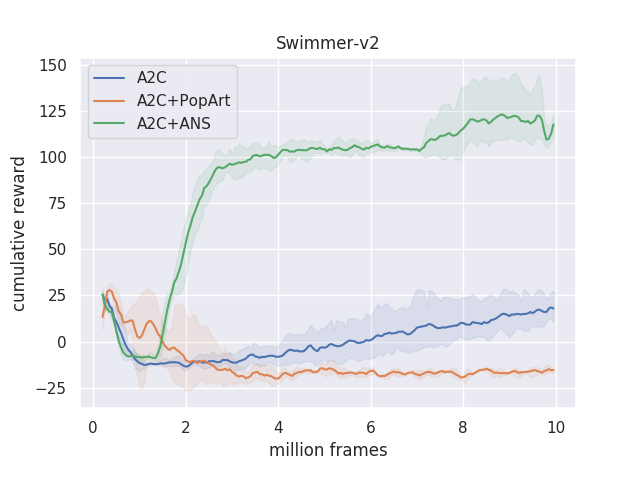}
    \caption{Swimmer-v2 (A2C)}
\end{subfigure}%
\begin{subfigure}{0.33\textwidth}
    \centering
    \includegraphics[width=1\linewidth]{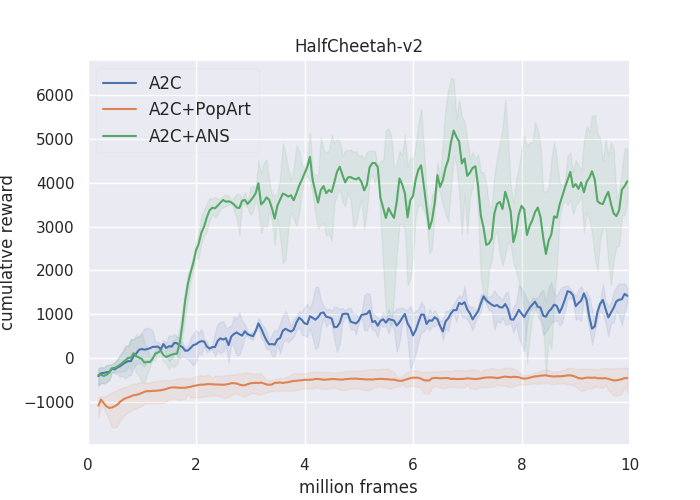}
    \caption{HalfCheetah-v2 (A2C)}
\end{subfigure}%
\begin{subfigure}{0.32\textwidth}
    \centering
    \vspace{2mm}
    \includegraphics[width=1\linewidth]{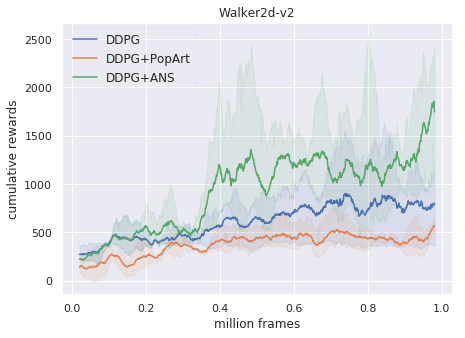}
    \caption{Walker2d-v2 (DDPG)}
\end{subfigure}
\caption{Performance comparison between original models, models with Pop-Art, and models with ANS.}
\label{fig:ans}
\end{figure*}

As indicated in Figure \ref{fig:ans}, our experiments show that the performance of ReLU is competitive with other activations although it is faster to compute. Besides, RL models equipped with Pop-Art even perform worse than original models. It indicates that unit variance and zero mean may not be the appropriate setting for ReLU networks. The other reason could be that Pop-Art is at first crafted for value-based RL approaches such as Dueling DQN \cite{wang2015dueling} instead of actor-critic models. On the other hand, ANS reaches competitive performance with the proper reward scales in ReLU experiments. The most important thing is that we have no prior information about how the reward scale should be and ANS is able to find it with limited number of frames. ANS also outperforms models with ELU, sigmoid and tanh when no reward scaling is applied. The advantages of ReLU networks with ANS over them are threefold: (1) It is easier to compute and therefore the learning process is faster. (2) It is able to preserve the sparsity property of ReLU. (3) The performance is better by a large margin. Another observation is that in most of the cases, ANS starts to shrink the reward scale after continuously increasing till reaches 64, which agrees with our previous experiments that the optimal scale is less than 64.

\section{Conclusion}

\noindent 
Here is the brief summary of the contribution of this work
\begin{itemize}
    \item We conduct thorough experiments and provide analysis to show that ReLU is a competitive activation function to used for training DRL, but its performance can be influenced by the scale of the rewards. 
    \item The idea of pseudo-dying ReLU is proposed for DRL models to evaluate the dying-ReLU situation, and empirically we have shown that it is correlated with the scale of the rewards as well as the final cumulative rewards.
    \item We propose the adaptive network scaling framework, which can be incorporated with actor-critic approaches with ReLU networks to efficiently find a suitable reward scale of better performance. Empirically, we show that ANS improves the performance with respect to the original models by a large margin.
\end{itemize}
Future works include the extension to analyze the effects of reward scaling to saturating activations such as \emph{sigmoid} and \emph{tanh}, and design a more general strategy to find suitable reward scale for different activation functions. 



\newpage
\balance
\bibliography{reference}

\begin{thebibliography}{}

\bibitem[\protect\citeauthoryear{Chen \bgroup et al\mbox.\egroup
  }{2018}]{chen2018stabilizing}
Chen, S.-Y.; Yu, Y.; Da, Q.; Tan, J.; Huang, H.-K.; and Tang, H.-H.
\newblock 2018.
\newblock Stabilizing reinforcement learning in dynamic environment with
  application to online recommendation.
\newblock In {\em Proceedings of the 24th ACM SIGKDD International Conference
  on Knowledge Discovery \& Data Mining},  1187--1196.
\newblock ACM.

\bibitem[\protect\citeauthoryear{Clevert, Unterthiner, and
  Hochreiter}{2015}]{clevert2015fast}
Clevert, D.-A.; Unterthiner, T.; and Hochreiter, S.
\newblock 2015.
\newblock Fast and accurate deep network learning by exponential linear units
  (elus).
\newblock {\em arXiv preprint arXiv:1511.07289}.

\bibitem[\protect\citeauthoryear{Duan \bgroup et al\mbox.\egroup
  }{2016}]{duan2016benchmarking}
Duan, Y.; Chen, X.; Houthooft, R.; Schulman, J.; and Abbeel, P.
\newblock 2016.
\newblock Benchmarking deep reinforcement learning for continuous control.
\newblock In {\em International Conference on Machine Learning},  1329--1338.

\bibitem[\protect\citeauthoryear{Dulac-Arnold \bgroup et al\mbox.\egroup
  }{2015}]{dulac2015deep}
Dulac-Arnold, G.; Evans, R.; van Hasselt, H.; Sunehag, P.; Lillicrap, T.; Hunt,
  J.; Mann, T.; Weber, T.; Degris, T.; and Coppin, B.
\newblock 2015.
\newblock Deep reinforcement learning in large discrete action spaces.
\newblock {\em arXiv preprint arXiv:1512.07679}.

\bibitem[\protect\citeauthoryear{Glorot and
  Bengio}{2010}]{glorot2010understanding}
Glorot, X., and Bengio, Y.
\newblock 2010.
\newblock Understanding the difficulty of training deep feedforward neural
  networks.
\newblock In {\em Proceedings of the thirteenth international conference on
  artificial intelligence and statistics},  249--256.

\bibitem[\protect\citeauthoryear{Glorot, Bordes, and
  Bengio}{2011}]{glorot2011deep}
Glorot, X.; Bordes, A.; and Bengio, Y.
\newblock 2011.
\newblock Deep sparse rectifier neural networks.
\newblock In {\em Proceedings of the fourteenth international conference on
  artificial intelligence and statistics},  315--323.

\bibitem[\protect\citeauthoryear{Gu \bgroup et al\mbox.\egroup
  }{2016}]{gu2016q}
Gu, S.; Lillicrap, T.; Ghahramani, Z.; Turner, R.~E.; and Levine, S.
\newblock 2016.
\newblock Q-prop: Sample-efficient policy gradient with an off-policy critic.
\newblock {\em arXiv preprint arXiv:1611.02247}.

\bibitem[\protect\citeauthoryear{Guss and
  Salakhutdinov}{2018}]{guss2018characterizing}
Guss, W.~H., and Salakhutdinov, R.
\newblock 2018.
\newblock On characterizing the capacity of neural networks using algebraic
  topology.
\newblock {\em arXiv preprint arXiv:1802.04443}.

\bibitem[\protect\citeauthoryear{Henderson \bgroup et al\mbox.\egroup
  }{2017}]{henderson2017deep}
Henderson, P.; Islam, R.; Bachman, P.; Pineau, J.; Precup, D.; and Meger, D.
\newblock 2017.
\newblock Deep reinforcement learning that matters.
\newblock {\em arXiv preprint arXiv:1709.06560}.

\bibitem[\protect\citeauthoryear{Kakade}{2002}]{kakade2002natural}
Kakade, S.~M.
\newblock 2002.
\newblock A natural policy gradient.
\newblock In {\em Advances in neural information processing systems},
  1531--1538.

\bibitem[\protect\citeauthoryear{Kingma and Ba}{2014}]{kingma2014adam}
Kingma, D.~P., and Ba, J.
\newblock 2014.
\newblock Adam: A method for stochastic optimization.
\newblock {\em arXiv preprint arXiv:1412.6980}.

\bibitem[\protect\citeauthoryear{Lillicrap \bgroup et al\mbox.\egroup
  }{2015}]{lillicrap2015continuous}
Lillicrap, T.~P.; Hunt, J.~J.; Pritzel, A.; Heess, N.; Erez, T.; Tassa, Y.;
  Silver, D.; and Wierstra, D.
\newblock 2015.
\newblock Continuous control with deep reinforcement learning.
\newblock {\em arXiv preprint arXiv:1509.02971}.

\bibitem[\protect\citeauthoryear{Maas, Hannun, and
  Ng}{2013}]{maas2013rectifier}
Maas, A.~L.; Hannun, A.~Y.; and Ng, A.~Y.
\newblock 2013.
\newblock Rectifier nonlinearities improve neural network acoustic models.
\newblock In {\em Proc. icml}, volume~30, ~3.

\bibitem[\protect\citeauthoryear{Mnih \bgroup et al\mbox.\egroup
  }{2016}]{mnih2016asynchronous}
Mnih, V.; Badia, A.~P.; Mirza, M.; Graves, A.; Lillicrap, T.; Harley, T.;
  Silver, D.; and Kavukcuoglu, K.
\newblock 2016.
\newblock Asynchronous methods for deep reinforcement learning.
\newblock In {\em International conference on machine learning},  1928--1937.

\bibitem[\protect\citeauthoryear{Montavon, Orr, and
  M{\"u}ller}{1998}]{montavon1998tricks}
Montavon, G.; Orr, G.~B.; and M{\"u}ller, K.-R.
\newblock 1998.
\newblock Tricks of the trade.

\bibitem[\protect\citeauthoryear{Nair and Hinton}{2010}]{nair2010rectified}
Nair, V., and Hinton, G.~E.
\newblock 2010.
\newblock Rectified linear units improve restricted boltzmann machines.
\newblock In {\em Proceedings of the 27th international conference on machine
  learning (ICML-10)},  807--814.

\bibitem[\protect\citeauthoryear{Nesterov}{1983}]{nesterov1983method}
Nesterov, Y.
\newblock 1983.
\newblock A method for unconstrained convex minimization problem with the rate
  of convergence o (1/k\^{} 2).
\newblock In {\em Doklady AN USSR}, volume 269,  543--547.

\bibitem[\protect\citeauthoryear{Rosenblatt}{1962}]{rosenblatt}
Rosenblatt, F.
\newblock 1962.
\newblock Principles of neurodynamics.

\bibitem[\protect\citeauthoryear{Rumelhart, Hinton, and
  Williams}{1985}]{rumelhart1985learning}
Rumelhart, D.~E.; Hinton, G.~E.; and Williams, R.~J.
\newblock 1985.
\newblock Learning internal representations by error propagation.
\newblock Technical report, California Univ San Diego La Jolla Inst for
  Cognitive Science.

\bibitem[\protect\citeauthoryear{Schulman \bgroup et al\mbox.\egroup
  }{2017}]{schulman2017proximal}
Schulman, J.; Wolski, F.; Dhariwal, P.; Radford, A.; and Klimov, O.
\newblock 2017.
\newblock Proximal policy optimization algorithms.
\newblock {\em arXiv preprint arXiv:1707.06347}.

\bibitem[\protect\citeauthoryear{Silver \bgroup et al\mbox.\egroup
  }{2017}]{silver2017mastering}
Silver, D.; Schrittwieser, J.; Simonyan, K.; Antonoglou, I.; Huang, A.; Guez,
  A.; Hubert, T.; Baker, L.; Lai, M.; Bolton, A.; et~al.
\newblock 2017.
\newblock Mastering the game of go without human knowledge.
\newblock {\em Nature} 550(7676):354.

\bibitem[\protect\citeauthoryear{Sutton \bgroup et al\mbox.\egroup
  }{2000}]{sutton2000policy}
Sutton, R.~S.; McAllester, D.~A.; Singh, S.~P.; and Mansour, Y.
\newblock 2000.
\newblock Policy gradient methods for reinforcement learning with function
  approximation.
\newblock In {\em Advances in neural information processing systems},
  1057--1063.

\bibitem[\protect\citeauthoryear{Tieleman and
  Hinton}{2012}]{tieleman2012lecture}
Tieleman, T., and Hinton, G.
\newblock 2012.
\newblock Lecture 6.5-rmsprop: Divide the gradient by a running average of its
  recent magnitude.
\newblock {\em COURSERA: Neural networks for machine learning} 4(2):26--31.

\bibitem[\protect\citeauthoryear{Todorov, Erez, and
  Tassa}{2012}]{todorov2012mujoco}
Todorov, E.; Erez, T.; and Tassa, Y.
\newblock 2012.
\newblock Mujoco: A physics engine for model-based control.
\newblock In {\em Intelligent Robots and Systems (IROS), 2012 IEEE/RSJ
  International Conference on},  5026--5033.
\newblock IEEE.

\bibitem[\protect\citeauthoryear{van Hasselt \bgroup et al\mbox.\egroup
  }{2016}]{van2016learning}
van Hasselt, H.~P.; Guez, A.; Hessel, M.; Mnih, V.; and Silver, D.
\newblock 2016.
\newblock Learning values across many orders of magnitude.
\newblock In {\em Advances in Neural Information Processing Systems},
  4287--4295.

\bibitem[\protect\citeauthoryear{Vincent, de Br{\'e}bisson, and
  Bouthillier}{2015}]{vincent2015efficient}
Vincent, P.; de~Br{\'e}bisson, A.; and Bouthillier, X.
\newblock 2015.
\newblock Efficient exact gradient update for training deep networks with very
  large sparse targets.
\newblock In {\em Advances in Neural Information Processing Systems},
  1108--1116.

\bibitem[\protect\citeauthoryear{Wang \bgroup et al\mbox.\egroup
  }{2015}]{wang2015dueling}
Wang, Z.; Schaul, T.; Hessel, M.; Van~Hasselt, H.; Lanctot, M.; and De~Freitas,
  N.
\newblock 2015.
\newblock Dueling network architectures for deep reinforcement learning.
\newblock {\em arXiv preprint arXiv:1511.06581}.

\bibitem[\protect\citeauthoryear{{Zisserman}}{2013}]{azzigzag}
{Zisserman}.
\newblock 2013.
\newblock C25 optimization.
\newblock [accessed 23-August-2018].

\end{thebibliography}
\bibliographystyle{aaai}
\end{document}